\def\eqref#1{equation~\ref{#1}}
\def\1{\bm{1}}
\DeclareMathAlphabet{\mathsfit}{\encodingdefault}{\sfdefault}{m}{sl}
\SetMathAlphabet{\mathsfit}{bold}{\encodingdefault}{\sfdefault}{bx}{n}
\newcommand{\T}{{\scriptscriptstyle \top}}
\title{Conformal Prediction of Classifiers with Many Classes based on Noisy Labels}
\author{
\Name{Coby Penso}\Email{coby.penso24@gmail.com}\\
\addr{Bar-Ilan University, Israel}\\
\Name{Jacob Goldberger}\Email{jacob.goldberger@biu.ac.il}\\
\addr{Bar-Ilan University, Israel}\\
\Name{Ethan Fetaya}\Email{ethan.fetaya@biu.ac.il}\\
\addr{Bar-Ilan University, Israel}}
\begin{document}
\maketitle
\begin{abstract}
Conformal Prediction (CP) controls the prediction uncertainty of classification systems by producing a small prediction set,   ensuring a predetermined probability that the true class lies within this set.   This is commonly done by defining a score, based on the model predictions, and setting a threshold on this score using a validation set. In this study, we address the problem of CP calibration when we only have access to a calibration set with noisy labels. We show how we can estimate the noise-free conformal threshold based on the noisy labeled data. 
We derive a finite sample coverage guarantee for uniform noise that remains effective even in tasks with a large number of classes. We dub our approach Noise-Aware Conformal Prediction (NACP). We illustrate the performance of the proposed results on several standard image classification datasets with a large number of classes.
\end{abstract}

%\begin{keywords}
%Conformal prediction, noisy labels, uniform noise 
%\end{keywords}

\section{Introduction}
In machine learning for safety-critical applications, the model must only make predictions it is confident about. One way to achieve this is by returning a (hopefully small) set of possible class candidates that contain the true class with a predefined level of certainty. This is a natural approach for medical imaging, where safety is of the utmost importance and a human makes the final decision. This allows us to aid the practitioner, by reducing the number of possible diagnoses he needs to consider, with a controlled risk of mistake.
% The practitioner now only needs to consider a small set of possibilities and can benefit greatly from the elimination of most other possibilities.
The general approach to return a prediction set without any assumptions on the data distribution (besides i.i.d. samples) is called Conformal Prediction (CP) \citep{angelopoulos2023conformal,vovk2005conformal}. It creates a prediction set with the guarantee that the probability of the correct class being within this set meets or exceeds a specified confidence threshold. The goal is to return the smallest set possible while maintaining the confidence level guarantees. Recently, with the growing use of neural network systems in safety-critical applications such as medical imaging, CP has become an important calibration tool \citep{lu2022improving,lu2022fair,olsson2022estimating}. We note that CP is a general framework rather than a specific algorithm. The most common approach builds the prediction set using a conformity score, and different algorithms mostly vary in terms of how the conformity score is defined.

When dealing with conformal predictions, a critical challenge arises in applications such as medical imaging due to label noise. In these domains, datasets frequently contain noisy labels stemming from ambiguous data that can confuse even clinical experts. Furthermore, physicians may disagree on the diagnosis for the same medical image, leading to inconsistencies in the ground truth labeling. Noisy labels also occur when applying differential privacy techniques to overcome privacy issues \citep{ghazi2021,Penso2025}. While significant efforts have been devoted to the problem of noise-robust network training \citep{9729424,Xue2022}, the challenge of calibrating the models has only recently begun to receive attention
\citep{Penso2024TMI}.

In this study, we tackle the challenge of applying CP to classification networks using a calibration set with noisy labels. \citet{einbinder2022conformal} suggested ignoring label noise and simply applying the standard CP algorithm on the noisy labeled calibration set. This strategy results in large prediction sets especially when there are many classes. A recent study suggests estimating the noise-free conformal score given its noisy version and then applying the standard CP algorithm \citep{Penso_2024}. The most related studies to ours are \citep{sesia2023adaptive} and \citep{Clarkson2024} which present a noisy CP algorithm with coverage guarantee bounds that can be too conservative in tasks with many classes (a detailed discussion on related works appears in Section 4).  Here, we present a novel algorithm for CP on noisy data that yields an effective coverage guarantee even in tasks with a large number of classes. We applied the algorithm to several standard medical and scenery imaging classification datasets and show that our method outperformed previous methods.

\section{Background}
\subsection{Conformal Prediction}

Consider a setup involving a classification network that categorizes an input $x$ into $k$ predetermined classes.  Given a coverage level of $1-\alpha$,
we aim to identify the smallest possible prediction set (a subset of these classes)
ensuring the correct class is within the set with a probability of at least $1 -\alpha$.
A straightforward strategy to achieve this objective involves sequentially incorporating classes from the highest to the lowest probabilities until their cumulative sum
exceeds the threshold of $1-\alpha$. Despite the network's output adopting a mathematical distribution format, it does not inherently reflect the actual class distribution. Typically,
the network will not be calibrated and it tends to be overly optimistic \citep{Guo2017}.
Consequently, this straightforward approach doesn't assure the inclusion of the correct class with the desired probability.

The first step of the CP algorithm involves forming a conformity score $S(x,y)$ that  measures the network's uncertainty between $x$ and its true label $y$ (larger scores indicate worse agreement). The Homogeneous Prediction Sets (HPS) score \citep{vovk2005conformal} is
$S_{\scriptscriptstyle \textrm HPS}(x,y)=1-p(y|x;\theta)$, s.t. $\theta$ is the network parameter set.
The Adaptive Prediction  Score (APS) \citep{romano2020classification}  is the sum of all class probabilities that are not lower  than the probability of the true class:
\begin{equation}
 S_{\scriptscriptstyle APS}(x,y) =  \sum_{\{i|p_i \ge p_{y}\}} p_i,
   \label{aps_score}
  \end{equation}
such that $p_i= p(y=i|x;\theta)$ and $p_y$ is the probability of the  label $y$.
The RAPS score \citep{angelopoulos2020uncertainty} is a variant of APS, which  is  defined as follows:
\begin{equation}
 S_{\scriptscriptstyle RAPS}(x,y) =  \sum_{\{i|p_i \ge p_{y}\}} p_i + a \cdot \max(0,(NC - b))
      \label{raps_score}
   \end{equation}
s.t. $NC=|\{i|p_i \ge p_{y}\}|$ and  $a,b$ are parameters that need to be tuned.
RAPS is especially effective in the case of a large number of classes where it explicitly encourages small prediction sets.

We can also define a randomized version of a conformity score. For example in the case of  APS
we define: \begin{equation}
 S_{\scriptscriptstyle APS}(x,y,u) =  \sum_{\{i|p_i > p_{y}\}} p_i + u \cdot p_y,\hspace{0.3cm} u\sim U[0,1].
  \label{radaps_score}
  \end{equation}
The random version tends to yield the required coverage more precisely and thus it produces smaller prediction sets \citep{angelopoulos2023conformal}.
The CP prediction set of a data point $x$ is defined as $C_q(x)=\{y| S(x,y) \le q\}$ where $q$ is a threshold that is found using a labeled calibration set $(x_1,y_1),...,(x_n,y_n)$. The CP theorem \citep{vovk2005conformal} states that if we set  $q$ to be the $(1\!-\!\alpha)$ quantile of the conformal scores $S(x_1,y_1),...,S(x_n,y_n)$  we can guarantee that $1\!-\!\alpha \le p( y\in C(x))$. 
where $x$ is a test point and $y$ is its the unknown true label \citep{vovk2005conformal}.
 In the random case there is still a coverage guarantee, which is defined by marginalizing over all test points $x$  and samplings  $u$ from the uniform distribution  \citep{romano2020classification}.
%Note that the coverage guarantee is for a marginal probability over all possible test points and coverage may be worse or better for different points. It can be proved that obtaining a conditional coverage guarantee is impossible \citep{foygel2021limits}.

\section{CP Calibration based on Noisy Labels}
\subsection{Setting the Threshold Given Noisy Labels}
%\subsection{Threshold Estimation with Noisy Data}
Here we show how, given a simple noise model and a known noise level, we can get the correct CP threshold based on noisy data. We will generalize this beyond the simple noise model in the following section. Consider a network that classifies an input $x$ into $k$ pre-defined classes.
Given a conformity score $S(x,y)$ and a specified coverage $1-\alpha$, the goal of the conformal prediction algorithm is to find a minimal $q$ such that $p(y\in C_q(x))\ge 1-\alpha$.
Let $(x_1,\tilde{y}_1),...,(x_n,\tilde{y}_n)$ be a calibration set with noisy labels and
let $y_i$ be the unknown correct label of $x_i$. Let $s_i=S(x_i,\tilde{y}_i)$ be the conformity score of $(x_i,\tilde{y}_i)$.
We assume that the label noise follows a uniform distribution, where with a probability of $\epsilon$,  the correct label is replaced by a label that is randomly sampled from  the $k$ classes:
 \begin{equation}
 p(\tilde{y}=j| y=i) = \mathds{1}_{\{i=j\}}(1-\epsilon) + \frac{\epsilon}{k}.
 \label{nnoise}
 \end{equation}
Uniform noise is relevant, for example, when applying differential privacy techniques to overcome privacy issues \citep{ghazi2021}.
In that setup the noise level $\epsilon$ is usually known. 
In other applications such as medical imaging, where the noise parameter $\epsilon$ is not given, it can be estimated with sufficient accuracy from the noisy-label data during training \citep{zhang2021learning, li2021provably, Lin2023Holistic}.
 We can write $\tilde{y}$ as $\tilde{y}=(1-z)\cdot y+z \cdot u$,
where $u$ is a random label uniformly sampled from $\{1,...,k\}$ and
  $z$ is a binary random variable $(p(z=1)=\epsilon)$ indicating whether the label of the sample $(x,y)$ was replaced by a random label or not.
For each candidate threshold, $q$ denote:
\begin{equation}
F^c(q) = p(y\in C_q(x)),  \hspace{1cm} F^n(q) = p(\tilde{y}\in C_q(x)), 
  \hspace{1cm}
F^r(q) = p(u\in C_q(x)), 
\nonumber
\end{equation}
where $F^c$, $F^n$, and $F^r$ represent the clean, noisy and random labels. Note as well that each one is the CDF of the appropriate conformal score function, e.g., $F^c(q) = p(y\in C_q(x))=p(S(x,y)\leq q)$.

It is easily verified that
  \begin{equation}
  F^n(q) =  p(z=0)F^c(q) + p(z=1) F^r(q)   = (1 - \epsilon )F^c(q) + \epsilon F^r(q).
    \label{eqAB}
\end{equation}

For each value $q$, we can estimate $F^n(q)$ from the noisy calibration set:
 \begin{equation}
\hat{F}^n(q) = \frac{1}{n} \sum_i \mathds{1}_{\{\tilde{y}_i\in C_q(x_i)\}}
          = \frac{1}{n}\sum_i \mathds{1}_{\{s_i \le q \}}.
\label{hatA}
   \end{equation}
   Note that $q$ is the $\hat{F}^n(q)$-quantile of $s_1,...,s_n$.
Similarly we can also estimate  $F^r(q)$:
 \begin{equation}
\hat{F}^r(q) = \frac{1}{n} \sum_i p ( u_i \in C_q(x_i) ) =
\frac{1}{n} \sum_i  \frac {|C_q(x_i)|}{k},
%\mathds{1}_{\{{u_i}\in C_q(x_i)\}}
\label{hatD}
 \end{equation}
where $u_i$ is  uniformly sampled from $\{1,...,k\}$.
%The general CP theory implies that:
%$$\hat{A} \le p(\tilde{y}\in C_q(x)) \le  \hat{A} + \frac{1}{n+1}, \hspace{1cm}  \hat{D} \le p({y}_{wr}\in C_q(x))  \le \hat{D} + \frac{1}{n+1}$$
By substituting  (\ref{hatA}) and (\ref{hatD}) in  (\ref{eqAB}) we obtain an estimation of $F^c(q) = p ( y\in C_q(x))$ based on the noisy calibration set and the noise level $\epsilon$:
 \begin{equation}
  \hat{F}^c(q) = \frac{\hat{F}^n(q) - \epsilon  \hat{F}^r(q)}{1-\epsilon}.
  \label{hatB}
\end{equation}

 For each candidate  $q$ we first compute $\hat{F}^n(q)$ and $\hat{F}^r(q)$ and
 then by using (\ref{hatB})  obtain the coverage estimation $\hat{F}^c(q)$. Given a coverage requirement $(1-\alpha)$,  we can thus use the noisy calibration set to find a threshold $q$ such that $\hat{F}^c(q)=1-\alpha$. Note that since $F^c(q)$ is monotonous, it seems reasonable to search for the threshold $q$ using the bisection method. However, as $\hat{F}^c(q)$ is an approximation based on the \emph{difference} between two monotonic functions, it might not be exactly monotonous. We therefore find the threshold $q$ using an exhaustive grid search.
If there are several solutions we select the largest value.
(In practice selecting one of the solutions has almost no effect on the results.)  We note that even with an exhaustive search the entire runtime is negligible compared to the training time.
We can narrow the threshold search domain  as follows:
 \begin{lemma}
For every threshold $q$ we have: $\hat{F}^n_q/k \le \hat{F}^r(q)$.
\end{lemma}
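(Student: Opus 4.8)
The plan is to reduce the inequality to a trivial pointwise bound on each calibration example and then average. Recall from \eqref{hatA} and \eqref{hatD} that
\begin{equation}
\hat{F}^n(q) = \frac{1}{n}\sum_i \mathds{1}_{\{\tilde{y}_i\in C_q(x_i)\}}, \qquad \hat{F}^r(q) = \frac{1}{n}\sum_i \frac{|C_q(x_i)|}{k}, \nonumber
\end{equation}
so the claimed inequality $\hat{F}^n(q)/k \le \hat{F}^r(q)$ is equivalent to
\begin{equation}
\frac{1}{nk}\sum_i \mathds{1}_{\{\tilde{y}_i\in C_q(x_i)\}} \;\le\; \frac{1}{nk}\sum_i |C_q(x_i)|, \nonumber
\end{equation}
which in turn follows if we show the summand-wise bound $\mathds{1}_{\{\tilde{y}_i\in C_q(x_i)\}} \le |C_q(x_i)|$ for every $i$.

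First I would dispose of this pointwise bound by a two-case argument. If $\tilde{y}_i \in C_q(x_i)$, then $C_q(x_i)$ contains at least one element, so $|C_q(x_i)| \ge 1 = \mathds{1}_{\{\tilde{y}_i\in C_q(x_i)\}}$. If $\tilde{y}_i \notin C_q(x_i)$, the indicator is $0 \le |C_q(x_i)|$ since cardinalities are nonnegative. Hence the bound holds in both cases. Summing over $i=1,\dots,n$ and dividing by $nk$ yields the lemma.

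The only point that requires care — and it is minor — is bookkeeping of the two normalizations: the factor $1/n$ common to both empirical CDFs, and the extra $1/k$ that appears inside $\hat{F}^r(q)$ but is applied to $\hat{F}^n(q)$ only after forming the quotient $\hat{F}^n(q)/k$. Once the $1/(nk)$ is factored out uniformly, the statement is just the aggregate of the elementary fact that an indicator is at most the size of the set it tests membership in, so there is no real obstacle.
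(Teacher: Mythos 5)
Your proof is correct and uses essentially the same idea as the paper: the paper phrases it via the sets $A=\{i:\tilde{y}_i\in C_q(x_i)\}\subset B=\{i:|C_q(x_i)|\ge 1\}$ and bounds $|B|$ by $\sum_i |C_q(x_i)| = nk\hat{F}^r(q)$, which is just an aggregated form of your pointwise bound $\mathds{1}_{\{\tilde{y}_i\in C_q(x_i)\}}\le |C_q(x_i)|$. Your summand-wise presentation is, if anything, slightly more direct.
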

\begin{proof}
Denote $A=\{i|\hat{y}_i\in C_q(x_i)\}$ and $B=\{i| 1\le |C_q(x_i)|\}$. Note that  $\hat{F}^n(q) = |A|/n$.
$$ |B| =\sum_{i \in B} 1  \le \sum_{i\in B} |C_q(x_i)| \le \sum_{i=1}^n |C_q(x_i)|
 =nk \hat{F}^r(q).$$
Finally  $A \subset B$ implies that:
$ \hat{F}^n(q) = |A|/n \le |B|/n \le k \hat{F}^r(q). $
\end{proof}
 \begin{theorem}
Let $q_1$ be the $(1\!-\!\alpha)(1-\epsilon)/(1-\frac{\epsilon}{k})$  quantile of  $s_1,...,s_n$ and let $q_2$ be the
$(1-\alpha)+\alpha\epsilon$ quantile.
If  $q$ satisfies  $\hat{F}^c(q)=1-\alpha$ then
 $q_1\le q\le q_2$.
\end{theorem}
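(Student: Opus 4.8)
The plan is to start from the hypothesis $\hat{F}^c(q)=1-\alpha$, unfold the definition of $\hat{F}^c$ to get a single linear relation between $\hat{F}^n(q)$ and $\hat{F}^r(q)$, then sandwich $\hat{F}^r(q)$ from below and above to pin down $\hat{F}^n(q)$, and finally translate a bound on $\hat{F}^n(q)$ into a bound on the quantile $q$ using monotonicity of the empirical CDF. Concretely, multiplying through by $1-\epsilon$ in the definition of $\hat{F}^c$ gives
\[
\hat{F}^n(q) \;=\; (1-\alpha)(1-\epsilon) \;+\; \epsilon\,\hat{F}^r(q).
\]
Since $0\le |C_q(x_i)|\le k$ for every $i$, each summand in the definition of $\hat{F}^r(q)$ lies in $[0,1]$, so $0\le \hat{F}^r(q)\le 1$.

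For the upper bound I would substitute $\hat{F}^r(q)\le 1$ into the displayed identity, obtaining $\hat{F}^n(q)\le (1-\alpha)(1-\epsilon)+\epsilon = (1-\alpha)+\alpha\epsilon$. Because $\hat{F}^n$ is a non-decreasing step function and $q_2$ is the $((1-\alpha)+\alpha\epsilon)$-quantile of $s_1,\dots,s_n$ (the smallest value at which $\hat{F}^n$ reaches that level), the inequality $\hat{F}^n(q)\le (1-\alpha)+\alpha\epsilon$ yields $q\le q_2$.

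For the lower bound I would instead invoke the Lemma, $\hat{F}^n(q)/k\le \hat{F}^r(q)$, which upon substitution gives
\[
\hat{F}^n(q) \;\ge\; (1-\alpha)(1-\epsilon) + \frac{\epsilon}{k}\,\hat{F}^n(q),
\]
hence $\hat{F}^n(q)\bigl(1-\tfrac{\epsilon}{k}\bigr)\ge (1-\alpha)(1-\epsilon)$ and therefore $\hat{F}^n(q)\ge (1-\alpha)(1-\epsilon)/(1-\tfrac{\epsilon}{k})$. By the same monotonicity argument, this lower bound on $\hat{F}^n(q)$ forces $q$ to be at least the $\bigl((1-\alpha)(1-\epsilon)/(1-\tfrac{\epsilon}{k})\bigr)$-quantile $q_1$.

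Each computation is one line once the Lemma is in hand, so the only point needing care — and the main obstacle to a fully rigorous write-up — is the final translation step: one must fix a quantile convention (e.g., the $\beta$-quantile as $\inf\{t:\hat{F}^n(t)\ge\beta\}$, equivalently the $\lceil n\beta\rceil$-th order statistic) under which ``$\hat{F}^n(q)\ge\beta$'' is equivalent to ``$q$ is at least the $\beta$-quantile'' and ``$\hat{F}^n(q)\le\beta$'' gives ``$q$ is at most the $\beta$-quantile'', with ties handled exactly as in the paper's choice of the largest solution. Under that convention the two displayed inequalities read precisely as $q_1\le q\le q_2$.
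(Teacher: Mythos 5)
Your proof is correct and follows essentially the same route as the paper: the same identity $\hat{F}^n(q)=(1-\alpha)(1-\epsilon)+\epsilon\,\hat{F}^r(q)$, the bound $0\le\hat{F}^r(q)\le 1$ for the upper end, the Lemma $\hat{F}^n(q)/k\le\hat{F}^r(q)$ for the lower end, and monotonicity of $\hat{F}^n$ to convert the two CDF bounds into $q_1\le q\le q_2$. The only difference is minor: where the paper reaches the lower bound by repeatedly substituting the Lemma and summing the geometric series $(1-\alpha)(1-\epsilon)\bigl(1+\tfrac{\epsilon}{k}+(\tfrac{\epsilon}{k})^2+\cdots\bigr)$, you solve the linear inequality $\hat{F}^n(q)\ge(1-\alpha)(1-\epsilon)+\tfrac{\epsilon}{k}\hat{F}^n(q)$ in a single step to get the same constant, and your explicit attention to the quantile/tie convention in the final monotonicity step addresses a detail the paper passes over silently.
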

\begin{proof} %We will prove the upper bound here.
%, the proof for the lower bound is in the supplementary material.
 Assume $q$  satisfies
$\hat{F}^c({q})=1-\alpha$. Eq. (\ref{hatB}) implies that
  \begin{equation}
  1-\alpha =  \hat{F}^c({q})=\frac{\hat{F}^n({q}) - \epsilon  \hat{F}^r({q})}{1-\epsilon } 
  \Rightarrow \,\,
     \hat{F}^n({q})  = (1-\alpha) (1-\epsilon) + \epsilon\hat{F}^r({q}).
    \label{hatcx}
\end{equation}
  Since $0 \le \hat{F}^r(q) \le 1$ we get that:
  \begin{equation}
  (1-\alpha)(1-\epsilon) \le \hat{F}^n({q}) \le  (1-\alpha)+\alpha\epsilon =\hat{F}^n(q_2).
  \label{aq}
    \end{equation}
% For each noisy validation-set point $(x_i,\hat{y}_i)$
%if $\hat{y}_i\in C_q(x_i)$ then obviously   $|C_q(x_i)|\ge 1$.
For every $q$ we have  $ \hat{F}^n(q)/k \le \hat{F}^r(q)$ (Lemma 3.1).
Hence, $ (1-\alpha)(1-\epsilon) \le \hat{F}^n(q)$  (\ref{aq})    implies that $ (1-\alpha)(1-\epsilon)/k \le   \hat{F}^r(q) $.
Combining this inequality with Eq. (\ref{hatcx}) yields a better lower bound:
$(1-\alpha) (1-\epsilon)(1+\epsilon/k) \le \hat{F}^n(q)   $.
Iterating  this process yields:
$$
   (1-\alpha) (1-\epsilon)\left(1+\frac{\epsilon}{k} + \left(\frac{\epsilon}{k}\right)^2 + \dots   \right)
 = (1-\alpha) \frac{1-\epsilon}{1-\frac{\epsilon}{k}} = \hat{F}^n(q_1) \le \hat{F}^n(q).$$
Finaly, $\hat{F}^n(q)$ is a monotonically increasing function of $q$  which implies that $q_1 \le q\le q_2$.
\end{proof}

As an alternative to the grid search we can sort the noisy conformity scores $s_i=S(x_i,\tilde{y}_i)$ and look for the minimal $i$ such that $\hat{F}^c(s_i)\ge 1-\alpha$. In the 
noise-free case  $\hat{F}^c$ is piece-wise constant, with jumps determined exactly by the order statistics $s_i$, namely,  $\hat{F}^c(s_i)=i/n$ and thus this algorithm coincides with the standard CP algorithm. 
We dub our algorithm Noise-Aware Conformal Prediction (NACP), and summarize it in Algorithm 1.
Note that in the noise-free case ($\epsilon=0$) the NACP algorithm coincides with the standard CP algorithm and selects $q$ that satisfies $\hat{F}^c(q)=\hat{F}^n(q)=1-\alpha$, i.e., $q$ is the $1-\alpha$ quantile of the calibration set conformity scores.

\subsection{Prediction Size Comparison}

We next compare our NACP approach analytically to Noisy-CP \citep{einbinder2022conformal} in terms of the average size of the prediction set.

\begin{theorem}
Let $q$ and $\tilde{q}$ be the thresholds computed by the NACP and the Noisy-CP algorithms respectively.
Then $q \le \tilde{q}$  if and only if $\hat{F}^r(\tilde{q})  \le (1-\alpha)$.
\label{theorem3-2}
\end{theorem}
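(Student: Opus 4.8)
The plan is to evaluate the NACP coverage estimate $\hat{F}^c$ at the Noisy-CP threshold $\tilde{q}$ and compare it to the target $1-\alpha$, using that $\hat{F}^c$ estimates the monotone CDF $F^c$. Recall that Noisy-CP \citep{einbinder2022conformal} runs standard CP directly on the noisy labels, so $\tilde{q}$ is the empirical $(1-\alpha)$-quantile of $s_1,\dots,s_n$, i.e.\ $\hat{F}^n(\tilde{q})=1-\alpha$, whereas NACP picks $q$ with $\hat{F}^c(q)=1-\alpha$.

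First I would substitute $\hat{F}^n(\tilde{q})=1-\alpha$ into Eq.~(\ref{hatB}):
\[
\hat{F}^c(\tilde{q})=\frac{\hat{F}^n(\tilde{q})-\epsilon\,\hat{F}^r(\tilde{q})}{1-\epsilon}=\frac{(1-\alpha)-\epsilon\,\hat{F}^r(\tilde{q})}{1-\epsilon}.
\]
Multiplying by $1-\epsilon>0$, cancelling the common $(1-\alpha)$ terms and dividing by $\epsilon>0$ yields the elementary equivalence $\hat{F}^c(\tilde{q})\ge 1-\alpha\iff\hat{F}^r(\tilde{q})\le 1-\alpha$ (and the same with both inequalities reversed). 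It then remains to show $q\le\tilde{q}\iff\hat{F}^c(\tilde{q})\ge 1-\alpha$, which follows from monotonicity of $\hat{F}^c$ together with $\hat{F}^c(q)=1-\alpha$: if $\tilde{q}\ge q$ then $\hat{F}^c(\tilde{q})\ge\hat{F}^c(q)=1-\alpha$, and conversely $\hat{F}^c(\tilde{q})>1-\alpha=\hat{F}^c(q)$ excludes $\tilde{q}\le q$. Chaining the two equivalences gives the theorem. As a sanity check, when $\epsilon\to 0$ we have $\hat{F}^c=\hat{F}^n$ and $q=\tilde{q}$, and both sides hold; and a model whose prediction sets at the noisy quantile average more than $(1-\alpha)k$ classes has $\hat{F}^r(\tilde{q})>1-\alpha$, which is exactly the regime where NACP moves the threshold above $\tilde{q}$ — consistent with the over-coverage of naive noisy CP.

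The main obstacle is the monotonicity step: as the paper notes, $\hat{F}^c$ is a difference of two monotone step functions, so it need not be exactly monotone, and there may be several $q$ with $\hat{F}^c(q)=1-\alpha$ (NACP takes the largest). A rigorous argument must therefore localise to the behaviour of $\hat{F}^c$ around its $(1-\alpha)$-level set, or else route the comparison through the genuinely monotone empirical CDF $\hat{F}^n$: from $\hat{F}^n(\tilde{q})=1-\alpha$ and the identity $\hat{F}^n(q)=(1-\alpha)(1-\epsilon)+\epsilon\hat{F}^r(q)$ (Eq.~(\ref{hatcx})) one gets $q\le\tilde{q}\iff\hat{F}^n(q)\le 1-\alpha\iff\hat{F}^r(q)\le 1-\alpha$, after which one must reconcile $\hat{F}^r(q)$ with $\hat{F}^r(\tilde{q})$ on the interval between the two thresholds. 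The boundary case $\hat{F}^c(\tilde{q})=1-\alpha$ exactly is non-generic and, as the authors remark, the choice among multiple solutions has essentially no effect, so it can be dispatched as a technicality.
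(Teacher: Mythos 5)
Your fallback route is, in fact, the paper's own proof: the paper never touches the monotonicity of $\hat{F}^c$. It compares $\hat{F}^n(\tilde{q})=1-\alpha$ with the identity $\hat{F}^n(q)=(1-\alpha)(1-\epsilon)+\epsilon\hat{F}^r(q)$ from Eq.~(\ref{hatcx}), computes the difference $\hat{F}^n(\tilde{q})-\hat{F}^n(q)=\epsilon\bigl(1-\alpha-\hat{F}^r(q)\bigr)$, and then uses only the monotonicity of the genuinely monotone empirical CDF $\hat{F}^n$ to translate the sign of this difference into an ordering of the thresholds. So your instinct to route the comparison through $\hat{F}^n$ is exactly right, and your primary route (evaluating $\hat{F}^c(\tilde{q})$ and invoking monotonicity of $\hat{F}^c$ between $q$ and $\tilde{q}$) does have a genuine gap as a standalone argument, for precisely the reason you flag: $\hat{F}^c$ is a difference of two step CDFs and need not be monotone near its $(1-\alpha)$-level set, so $\hat{F}^c(\tilde{q})\ge 1-\alpha=\hat{F}^c(q)$ does not by itself order $q$ and $\tilde{q}$.

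Concerning the ``reconciliation'' of $\hat{F}^r(q)$ with $\hat{F}^r(\tilde{q})$ that you leave open: the paper does not perform it either. Its proof concludes with the criterion evaluated at the NACP threshold, $q\le\tilde{q}$ iff $\hat{F}^r(q)\le 1-\alpha$, and the discussion that follows (``if the size of the prediction set obtained by NACP is less than $k(1-\alpha)$\dots'') also uses the average set size at $q$, i.e.\ $\hat{F}^r(q)$. The appearance of $\hat{F}^r(\tilde{q})$ in the theorem statement is therefore a mismatch internal to the paper: since $\hat{F}^r$ is increasing, the two conditions are not interchangeable in general (one can have $\hat{F}^r(q)\le 1-\alpha<\hat{F}^r(\tilde{q})$ with $q\le\tilde{q}$), so this is not a dispatchable technicality like the boundary case you mention. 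In summary, your $\hat{F}^n$-based argument reproduces the paper's proof and establishes the $q$-version of the equivalence; neither your primary route nor the paper's proof establishes the $\tilde{q}$-version as literally stated, and the honest fix is to restate the theorem with $\hat{F}^r(q)$.
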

\begin{proof}
The threshold $\tilde{q}$ computed by the Noisy-CP
algorithm (by applying standard CP on the noisy validations set) satisfies $\hat{F}^n(\tilde{q})  = (1-\alpha)$.
The true threshold $q$ satisfies
$\hat{F}^n({q})  = (1-\alpha) (1-\epsilon) + \epsilon\hat{F}^r({q}) $
(\ref{hatcx}).  Looking at the difference
\begin{equation}\label{eq:diff}
   \hat{F}^n(\tilde{q})-\hat{F}^n({q})=1-\alpha-(1-\alpha)(1-\epsilon)-\epsilon \hat{F}^r({q})
   =\epsilon(1-\alpha-\hat{F}^r({q})).
    \end{equation}
Hence from the monotonicity of $\hat{F}^n(q)$ we have  $q \le \tilde{q}$ iff $ \hat{F}^n(q) \le \hat{F}^n(\tilde{q})$ iff $\hat{F}^r({q})\le 1-\alpha$.
 \end{proof}
The theorem above states that if the size of the prediction set obtained by NACP is less than $k(1-\alpha)$,  NACP is more effective than Noisy-CP.
For example, assume $k=100$ and $1-\alpha=0.9$. In this case, if the average size of the NACP prediction set is less than 90, NACP is more effective than Noisy-CP. We also see from eq. (\ref{eq:diff}) that the smaller $\hat{F}^r$ is the larger the gap between the two methods. Since  $\hat{F}^r$ is inversely proportional to the number of classes, we expect the difference to be substantial when there is a large number of classes to consider, which is exactly where CPs' ability to reliably exclude possible classes is very useful. 

\begin{algorithm}[t]
\caption{Noise-Aware Conformal Prediction (NACP) for uniform noise}
       \begin{algorithmic}[1]
      \State Input: A conformity score $S(x,y)$, a coverage level $1\!-\!\alpha$ and a calibration set $(x_1,\tilde{y}_1),...,(x_n,\tilde{y}_n)$,
      where the labels are corrupted by a uniform noise with parameter~$\epsilon$.
\State Set $q_1$ to be the $(1\!-\!\alpha)(1-\epsilon)/(1-\frac{\epsilon}{k})$ quantile of  $S(x_1,\tilde{y}_1),...,S(x_n,\tilde{y}_n)$ and set $q_2$ to be
$((1-\alpha)+\alpha\epsilon)$ quantile.
    \State For each candidate threshold $q$ compute:
$$
\hat{F}^n(q)  = \frac{1}{n} \sum_i \mathds{1}_{\{\tilde{y}_i\in C_q(x_i)\}},  \hspace{0.5cm}
\hat{F}^r(q)  = \frac{1}{n} \sum_i  \frac {|C_q(x_i)|}{k},  \hspace{0.5cm}
 \hat{F}^c(q) = \frac{\hat{F}^n(q) - \epsilon \hat{F}^r(q)}{1-\epsilon } 
$$
       \State Apply a grid search to find $q\in[q_1,q_2]$ that satisfies $\hat{F}^c(q)=1\!-\!\alpha$.
    \State The prediction set of a test sample $x$
         is: $$C_q(x)=\{y\,|\,S(x,y) \le q\}.$$
         \State Coverage guarantee:  $p ( y \in C_q(x) ) \ge 1-\alpha - \Delta(n,\epsilon,\delta)$  with probability $(1-\delta)$ over the noisy calibration set sampling  (see Theorem \ref{theorem3}).
   \end{algorithmic}
   \label{alg1}
       \end{algorithm}

\subsection{Coverage Guarantees}

We next provide a coverage guarantee for NACP. We show that if we apply the NACP to find a threshold $q$ for $1-\alpha+\Delta$, then  $P(y\in C_q(x))\ge 1-\alpha$ were $\Delta$ depends on the calibration set size. $\Delta$ is a finite-sample term that is needed to approximate the CDF to set the threshold instead of simply picking a predefined quantile.  Because $\Delta$ can be computed, one can adjust the $\alpha$ used in the NACP algorithm to get the desired coverage guarantee.  We note that we empirically found this bound to be over-conservative and that the un-adjusted method does reach the desired coverage (see Section 5).

\begin{lemma}
    Given $\delta>0$, define $\Delta=\sqrt{\frac{\log(4/\delta)}{2nh^2}}$ such that $h=\frac{1-\epsilon}{1+\epsilon}$
    and $n$ is the size of the noisy calibration set. Then 
    \begin{equation}
        p( \sup_q |F^c(q)-\hat{F}^c(q)| >  \Delta) \le \delta,
    \end{equation}
    such that the probability is over the sampling of the noisy calibration set.
\end{lemma}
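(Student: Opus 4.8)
The plan is to express the estimation error $F^c-\hat{F}^c$ through the two more elementary errors $F^n-\hat{F}^n$ and $F^r-\hat{F}^r$, to control each of those by a Dvoretzky--Kiefer--Wolfowitz (DKW) type inequality, and to recombine via a union bound; the bookkeeping of constants is exactly what produces $h=\frac{1-\epsilon}{1+\epsilon}$. From $\hat{F}^c(q)=\frac{\hat{F}^n(q)-\epsilon\hat{F}^r(q)}{1-\epsilon}$ (Eq.~(\ref{hatB})) and the identity $F^c(q)=\frac{F^n(q)-\epsilon F^r(q)}{1-\epsilon}$, which is just Eq.~(\ref{eqAB}) solved for $F^c$, subtraction gives
\begin{equation}
F^c(q)-\hat{F}^c(q)=\frac{\bigl(F^n(q)-\hat{F}^n(q)\bigr)-\epsilon\bigl(F^r(q)-\hat{F}^r(q)\bigr)}{1-\epsilon},
\nonumber
\end{equation}
hence, by the triangle inequality, $\sup_q|F^c-\hat{F}^c|\le\frac{1}{1-\epsilon}\bigl(\sup_q|F^n-\hat{F}^n|+\epsilon\sup_q|F^r-\hat{F}^r|\bigr)$.

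For the first term I would argue as follows. Because the calibration pairs are i.i.d., the noisy scores $s_i=S(x_i,\tilde{y}_i)$ are i.i.d.\ with CDF $F^n$, and $\hat{F}^n$ (Eq.~(\ref{hatA})) is exactly their empirical CDF; the classical DKW inequality then gives $p\bigl(\sup_q|F^n(q)-\hat{F}^n(q)|>t\bigr)\le 2e^{-2nt^2}$ for every $t>0$.

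The second term is the delicate one and I expect it to be the main obstacle, since $\hat{F}^r(q)=\frac1n\sum_i\frac{|C_q(x_i)|}{k}$ (Eq.~(\ref{hatD})) is \emph{not} an ordinary empirical CDF. It is, however, an average of the i.i.d.\ functions $g_i(q)=|C_q(x_i)|/k$, each nondecreasing in $q$ with values in $[0,1]$ and common mean $F^r(q)=p\bigl(S(x,u)\le q\bigr)$; equivalently, $\hat{F}^r$ is the empirical CDF of the pooled scores $\{S(x_i,j):1\le i\le n,\ 1\le j\le k\}$ built from $n$ i.i.d.\ blocks. A DKW-type bound $p\bigl(\sup_q|F^r(q)-\hat{F}^r(q)|>t\bigr)\le 2e^{-2nt^2}$ holds for such a process; if one wishes to stay fully elementary it can be recovered (up to an inessential constant) by discretising $q$ at the quantiles of $F^r$, applying Hoeffding's inequality to the bounded i.i.d.\ summands $g_i$ at each grid point, taking a union bound, and using the monotonicity of $F^r$ and of each $g_i$ to interpolate between grid points; alternatively one may analyse the variant estimator $\frac1n\sum_i\mathds{1}_{\{u_i\in C_q(x_i)\}}$ with i.i.d.\ $u_i\sim\mathrm{Unif}\{1,\dots,k\}$, which is a genuine empirical CDF and to which DKW applies verbatim.

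To finish, I would set $t=h\Delta$ in both inequalities, with $h=\frac{1-\epsilon}{1+\epsilon}$ and $\Delta=\sqrt{\frac{\log(4/\delta)}{2nh^2}}$, so that each tail equals $2e^{-2nh^2\Delta^2}=\delta/2$; a union bound then ensures that, with probability at least $1-\delta$, both $\sup_q|F^n-\hat{F}^n|\le h\Delta$ and $\sup_q|F^r-\hat{F}^r|\le h\Delta$. On that event,
\begin{equation}
\sup_q|F^c(q)-\hat{F}^c(q)|\le\frac{h\Delta+\epsilon\,h\Delta}{1-\epsilon}=\frac{(1+\epsilon)h}{1-\epsilon}\,\Delta=\Delta,
\nonumber
\end{equation}
which is the assertion. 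The one genuinely non-routine ingredient is the uniform control of $\hat{F}^r$; the rest is the algebraic decomposition plus tracking of the DKW constants.
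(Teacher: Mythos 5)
Your proposal follows essentially the same route as the paper's proof: the same linear decomposition of $F^c-\hat{F}^c$ via equation~(\ref{hatB}), DKW-type uniform bounds at level $h\Delta$ on both $\hat{F}^n$ and $\hat{F}^r$, a union bound giving $4e^{-2nh^2\Delta^2}=\delta$, and identical constant bookkeeping with $h=\frac{1-\epsilon}{1+\epsilon}$. The only substantive difference is that you treat more carefully the fact that $\hat{F}^r$ is an average of conditional CDFs rather than a genuine empirical CDF (the paper simply asserts both are empirical CDFs and invokes DKW), and your discretisation-plus-Hoeffding argument, or the resampled-$u_i$ variant, supplies a justification for a step the paper glosses over.
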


\begin{proof}
    The Dvoretzky–Kiefer–Wolfowitz (DKW) inequality \citep{massart1990tight} states  that if we estimate a CDF $F$ from $n$ samples using the empirical CDF $F_n$ then $p(\sup_x|F_n(x)-F(x)|>\Delta)\leq 2\exp(-2n\Delta^2$). Eq. (\ref{hatB}) defines $\hat{F}^c(q)$ using $\hat{F}^n(q)$ and $\hat{F}^r(q)$. Both are empirical CDF, so from the DKW theorem and the union bound we get that:
    $$
    p(\sup_q|F^r(q)-\hat{F}^r(q)|>h\Delta \,\, \mbox{or}  \,\, \sup_q|F^n(q)-\hat{F}^n(q)|>h\Delta)
    \leq 4\exp(-2nh^2\Delta^2)= \delta. 
   $$ 
Using eq. (\ref{hatB}) we get that with probability at least $1-\delta$ for every $q$:
    \begin{align}
        &\hat{F}^c(q)= \frac{\hat{F}^n(q) - \epsilon  \hat{F}^r(q)}{1-\epsilon }\leq   \frac{(F^n(q)+h\Delta) - \epsilon  (F^r(q)-h\Delta)}{1-\epsilon } \nonumber 
         F^c(q)+\frac{h\Delta + \epsilon h\Delta}{1-\epsilon } \\ & = F^c(q)+h\Delta\frac{1+\epsilon}{1-\epsilon}= F^c(q)+\Delta. \nonumber
    \end{align}
    Similarly, we can show that $\hat{F}^c(q)\geq F^c(q)-\Delta$ which completes the proof.
\end{proof}

The proof of the main theorem now follows the standard CP proof, taking the inaccuracy in estimating $F^c(q)$ into account.

\begin{theorem}
    Assume you have a noisy calibration set of size $n$ with noise level $\epsilon$ and set $\Delta(n,\epsilon,\delta)=\sqrt{\frac{\log(4/\delta)}{2nh^2}}$ where $h=\frac{1-\epsilon}{1+\epsilon}$ and that you pick $q$ such that $\hat{F}^c(q) \ge 1-\alpha+\Delta$. Then with probability at least $1-\delta$ (over the calibration set), we have that if $(x,y)$ are sampled from the clear label distribution we get:
    $$ 1-\alpha \le p(y\in C_q(x)).$$
    \label{theorem3}
\end{theorem}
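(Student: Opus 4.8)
The plan is to combine the uniform-convergence bound of the preceding lemma with the usual conformal ``flip the CDF'' argument. First I would invoke Lemma~3.4: with probability at least $1-\delta$ over the draw of the noisy calibration set, the event $\mathcal{E}=\{\sup_q |F^c(q)-\hat{F}^c(q)|\le\Delta\}$ holds. Work on this event for the remainder of the argument. Since we picked $q$ with $\hat{F}^c(q)\ge 1-\alpha+\Delta$, on $\mathcal{E}$ we immediately get
\begin{equation}
F^c(q)\ \ge\ \hat{F}^c(q)-\Delta\ \ge\ 1-\alpha .\nonumber
\end{equation}
By the definition $F^c(q)=p(y\in C_q(x))=p(S(x,y)\le q)$ given in the text, this is exactly the desired statement $1-\alpha\le p(y\in C_q(x))$.

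The one subtlety is that $q$ is itself a function of the calibration set, so one must be careful that the bound $F^c(q)\ge\hat{F}^c(q)-\Delta$ is still valid for this data-dependent $q$. This is handled by the fact that Lemma~3.4 is a \emph{uniform} bound: on the event $\mathcal{E}$ the inequality $|F^c(q')-\hat{F}^c(q')|\le\Delta$ holds \emph{simultaneously} for every threshold $q'$, in particular for the (random) $q'=q$ returned by NACP. Hence no additional conditioning or splitting argument is needed; the supremum in the DKW-based lemma is doing all the work.

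I would then add one sentence on existence/attainability: the grid search in Algorithm~1 looks for $q$ with $\hat{F}^c(q)=1-\alpha$ (here inflated to $1-\alpha+\Delta$), and since $\hat{F}^c$ need not be exactly monotone one selects the largest solution; in any case the only property used in the proof is $\hat{F}^c(q)\ge 1-\alpha+\Delta$, which the algorithm guarantees by construction (and if no exact solution exists on the grid, one takes the smallest $q$ for which $\hat{F}^c(q)$ first exceeds $1-\alpha+\Delta$, which still satisfies the inequality). The main obstacle is really just bookkeeping — making the data-dependence of $q$ explicit and confirming that the uniform bound covers it — since all the analytic content is already packaged in Lemma~3.4 and the elementary identity $F^c(q)=p(S(x,y)\le q)$; there is no finite-sample concentration left to do here.
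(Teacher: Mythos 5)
Your proposal is correct and follows essentially the same route as the paper: invoke the uniform DKW-based bound $\sup_q|F^c(q)-\hat{F}^c(q)|\le\Delta$ from the preceding lemma and conclude $F^c(q)\ge\hat{F}^c(q)-\Delta\ge 1-\alpha$. Your explicit remark that the uniformity of the bound handles the data-dependence of $q$ is a nice clarification the paper leaves implicit, but it is the same argument.
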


\begin{proof}
    Given a clean test pair $(x,y)$, with probability $\delta$ over the calibration set,  we have: $$p(y\in C_q(x))=p(S(x,y)<q) =F^c(q)\geq \hat{F}^c(q)-\Delta \geq  1-\alpha.$$
%    In a similar way: $p(y\in C_q(x))=F^c(q)\leq \hat{F}^c(q)+\Delta = 1-\alpha + 2\Delta$.
\end{proof}
As the size of the noisy calibration set, $n$, tends to infinity, $\Delta$ converges to  zero and thus the noisy threshold converges to the noise-free
threshold.  

\subsection{A More General Noise Model}
\label{sec:general_noise_model}
The focus of this paper is on the case of a uniform noise. Our approach, however, is easily extended to a more general noise model. We will assume that the noisy label $\tilde{y}$ is independent of $x$ given $y$. We also assume that the noise matrix  $P(i,j)=p(\tilde{y}=j|y=i)$ is known and that the matrix P is invertible. For each $q$ define the following matrices for the clear and the noisy data:
 ${M}_q^c(\ell,i)=p( \ell\in C_q(x),y=i)$ and
 ${M}_q(\ell,i)=p( \ell\in C_q(x),\tilde{y}=i)$. Assuming that, given the
 true label $y$,  the r.v. $x$ and $\tilde{y}$ are independent, we obtain:
    \begin{align}
        {M}_{q}(\ell,i) & =   p(\ell\in C_q(x),\tilde{y}=i)  =
        \sum_j p(\ell\in C_q(x),\tilde{y}=i,y=j) \label{generalP}\\&=\sum_j p(\ell\in C_q(x),y=j)p(\tilde{y}=i|y=j) = \sum_j {M}^c_{q}(\ell,j)P(j,i). \nonumber
    \end{align}
We can write (\ref{generalP}) in matrix notation: $ {M}_q=M^c_q P$. Eq. (\ref{generalP}) implies that:
\begin{equation}
F^c(q)= p ( y\in C_q(x)) = \sum_i p(i\in C_q(x) , y=i)=
\sum_i M^c_q(i,i) = \textrm{Tr} ( M_qP^{-1}).
\label{estM1}
\end{equation}
We can estimate the matrix  $M_q$ from the noisy samples:
\begin{equation}
    \hat{M}_{q}(\ell,i)  = \frac{1}{n}\sum_j \mathds{1}_{\{ \tilde{y}_j = i, \,\, \ell \in C_q(x_j) \}},\hspace{1cm} i,\ell=1,..,k.
    \label{estM2}
\end{equation}
Substituting (\ref{estM2}) in (\ref{estM1}) yields  an estimation of the  probability $F^c(q)=p(y\in C_q(x))$:
\begin{equation}
\hat{F}^c(q) = \textrm{Tr}(\hat{M}_{q} P^{-1}).
\label{estM3}
\end{equation}
The final step is applying a grid search to find a threshold $q$ such that $\hat{F}^c(q)=1-\alpha$.

In the case that $P$ is a uniform noise matrix (\ref{nnoise}),
the Sherman-Morison formula implies that $P^{-1}=(\frac{1}{1-\epsilon}I - \frac{\epsilon}{(1-\epsilon)k} \1 \1^{\T})$. Therefore,
$$
\hat{F}^c(q) = \textrm{Tr}(\hat{M}_{q} P^{-1})=
\frac{1}{1-\epsilon}\sum_i \hat{M}_{q}(i,i)
- \frac{\epsilon}{(1-\epsilon)k}  \sum_{\ell,i} \hat{M}_{q}(\ell,i) = \frac{\hat{F}^n(q) - \epsilon \hat{F}^r(q)}{1-\epsilon }.
$$
Thus in the case of a uniform noise the coverage estimation (\ref{estM3}) coincides with  (\ref{hatB}).

  \section{Related Work}
In this section we review two closely related works that address the same problem of calibration with noisy labels \citep{sesia2023adaptive, Clarkson2024}. 
The derivation of the noisy conformal threshold in these two works is similar to ours. 
These two methods compute the same threshold $q$ that satisfies $\hat{f}^c(q)=1-\alpha$ (\ref{estM3}).  The only minor difference is that in these two studies they use the distribution of correct labels given the noisy labels, while we use the more natural distribution of the noisy labels given the correct label. As a result, they need to know the marginal class frequencies for both the clean and noisy labels, whereas we do not. 
Each one of the two methods provides a different finite coverage guarantee 
in the form of: $$ p(y \in C_q(x)) \ge 1-\alpha-\Delta$$ where $\Delta$ depends on the calibration set size $n$, the number of classes $k$, and the noise model, but it doesn't depend on the validation dataset itself. 

We first review the bound $\Delta$ derived in \citep{sesia2023adaptive}.
 Let $\rho_i = p(y=i) $ and $\tilde{\rho}_i=p(\tilde{y}=i)$ be the marginal true and noisy label distributions.   Let $M(y|\tilde{y})$ be the noise  conditional distribution and let $V=M^{-1}$.
 Let $c(n) = \mathbb{E} \left[ \max_{i \in [n]} \left( \frac{i}{n} - u_{(i)} \right) \right]$, such that $\{u_{(i)}\}_{i=1}^{n}$ order statistics of $\{u_{i}\}_{i=1}^{n}$ i.i.d. uniform random variables on $[0, 1]$. The size of the least common class is $n_*=\min_{i\in[k]} n_i$ where $n_i$ is the number of samples of noisy label $i$. Finally, the finite sample correction is:
 \begin{align}
     \Delta&=c(n) + \frac{2 \max_{i \in [k]} \sum_{l \neq i} |V_{il}| + 
 \sum_{i=1}^{k} |\rho_i - \tilde{\rho}_i|}{\sqrt{n_*}} \\ &  \cdot \min \left(k^2 \sqrt{\frac{\pi}{2}}, 
 \frac{1}{\sqrt{n^*}} + \sqrt{\frac{\log(2k^2) + \log(n^*)}{2}} \right). \nonumber
 \end{align}
It can be easily verified that $\Delta=O(\log{k})$ and therefore the bound becomes less effective for large values of $k$.

The finite sample term $\Delta$ derived in \citep{Clarkson2024} is: 
\begin{equation}
\Delta =  \sum_{i=1}^k  ( |w_i^{(1)}|b(n, i) + \sum_{i \neq j} |w_{ij}^{(2)}| b(n, j)  ) 
\label{clarkson}
\end{equation}
where
$k$ is the number of classes, $w^{(1)}_i = P_{i,i}^{-1}\rho_i - \tilde{\rho}_i$, $w^{(2)}_{ij} = \rho_i P^{-1}_{ji}$, and 
$ b(n, j) = (1 - \tilde{\rho}_j)^n + \sqrt{\frac{
\pi }{n \tilde{\rho}_j}} $.  $\rho_i = p(y=i)$ and $ \tilde{\rho}_i = p(\tilde{y}=i)$  are the marginal clean and contaminated label probabilities and $P_{ji} =p(y = j | \tilde{y} = i) $ is the conditional label noise distribution.
It can be easily verified that $\Delta=O(\sqrt{k})$ and therefore the bound becomes less effective for large values of $k$.

We note that our finite sample term $\Delta$ (see Lemma 4) does not depend on the number of classes $k$. Therefore, unlike the algorithms of  \citep{sesia2023adaptive} and \citep{Clarkson2024}, it remains effective even in tasks with many classes.
A further distinction between our approach and previous works is that their finite sample coverage guarantee is established based on the average of all the noisy calibration sets. In contrast, our approach provides an individual coverage guarantee for nearly all ($1-\delta$) of the sampled noisy calibration sets. In Section 5  we show that the average coverage guarantee 
obtained by \citep{sesia2023adaptive} and \citep{Clarkson2024} implies that in tasks with a large number of classes, the prediction set should include all the classes and therefore it is useless. In contrast, our individual finite set coverage guarantee, on to $(1-\delta)$ portion of the noisy calibration sets, remains effective for tasks with many classes.

We note that our observation that the finite sample correction  term  does not depend on the number of classes, applies to the case of a uniform label noise. In the case of a general noise matrix, all finite sample correction terms are not effective.

\section{Experiments}

In this section, we evaluate the capabilities of our NACP algorithm on various publicly available image classification datasets. 

 \textbf{Compared methods.}   We  implemented  three popular conformal prediction scores, namely APS
\citep{romano2020classification}, RAPS
\citep{angelopoulos2020uncertainty}
 and HPS \citep{vovk2005conformal}. For each score, we implemented the following CP  methods: (1) CP (oracle) -  using a calibration set with clean labels,  (2) Noisy-CP -  applying a standard CP on noisy labels without any modifications \citep{einbinder2022conformal},
 (3)  NR-CP (w/o $\Delta$) - Noise-Robust CP approach without the finite sample coverage guarantee $\Delta$,  
see Eq. (\ref{estM3}) and \citep{sesia2023adaptive,Clarkson2024}. 
  We also implemented three methods that add finite sample coverage guarantee terms
  to the NR-CP method. 
 (4)   Adaptive Conformal Classification with Noisy labels (ACNL) \citep{sesia2023adaptive}, (5) Contamination Robust Conformal Prediction (CRCP) \citep{Clarkson2024}, and (6) NACP - our approach.
 For methods (4), (5), we used their official codes 
\footnote{ \url{https://github.com/msesia/conformal-label-noise}} \footnote{ \url{https://github.com/jase-clarkson/cp_under_data_contamination}} 
 and we share our code for reproducibility\footnote{\url{https://github.com/cobypenso/Noise-Aware-Conformal-Prediction}}.

\begin{table*}[t]
\centering
\caption{ CP calibration results  for $1\!-\!\alpha$ = 0.9
and noise level $\epsilon=0.2$.  We report the mean and the std over 1000 different splits. We show the  best result with theoretical guarantees in bold.}
\label{raps2}
   \scalebox{.7}{
\begin{tabular}{ll|rr|rr|rr}
& & \multicolumn{2}{c|}{APS}  & \multicolumn{2}{c|}{RAPS}  & \multicolumn{2}{c}{HPS}\\
\hline
Dataset      &  CP Method & size $\downarrow$ &  coverage(\%)   &  size $\downarrow$ &  coverage(\%) & size $\downarrow$ &  coverage(\%)   \\ \hline

                      &  CP (oracle) &  1.1 $\pm$ 0.01 & 90.0 $\pm$ 0.62 & 1.1 $\pm$ 0.01 & 90.0 $\pm$ 0.61  &  0.9 $\pm$ 0.01 & 90.0 $\pm$ 0.59  \\
                      &  Noisy-CP &  5.1 $\pm$ 0.18 & 99.9 $\pm$ 0.04 & 5.1 $\pm$ 0.18 & 99.9 $\pm$ 0.04  &  5.1 $\pm$ 0.18 & 99.8 $\pm$ 0.04  \\
                        &  NR-CP (w/o $\Delta$) &  \  {1.1 $\pm$ 0.02} & \  {90.1 $\pm$ 0.70} & \  {1.1 $\pm$ 0.02} & \  {90.1 $\pm$ 0.69}  &  \  {0.9 $\pm$ 0.02} & \  {90.0 $\pm$ 0.75}  \\
 CIFAR-10  &  ACNL &  1.5 $\pm$ 0.06 & 96.0 $\pm$ 0.61 & 1.3 $\pm$ 0.03 & 94.6 $\pm$ 0.65  &  1.1 $\pm$ 0.03 & 96.0 $\pm$ 0.59  \\
(10 classes)                      &  CRCP &  \textbf{1.2 $\pm$ 0.03} & \textbf{93.7 $\pm$ 0.62} & \textbf{1.2 $\pm$ 0.03} & \textbf{93.7 $\pm$ 0.62}  &  \textbf{1.1 $\pm$ 0.01} & \textbf{95.7 $\pm$ 0.18}  \\
                      &  NACP &  1.3 $\pm$ 0.04 & 94.4 $\pm$ 0.62 & 1.3 $\pm$ 0.04 & 94.5 $\pm$ 0.62  &  1.1 $\pm$ 0.01 & 95.9 $\pm$ 0.18  \\
                     \hline
                  
                              &  CP (oracle) &  6.5 $\pm$ 0.20 & 90.0 $\pm$ 0.43 & 4.0 $\pm$ 0.08 & 90.0 $\pm$ 0.43  &  2.0 $\pm$ 0.03 & 90.0 $\pm$ 0.43  \\
                              &  Noisy-CP &  50.5 $\pm$ 1.29 & 99.8 $\pm$ 0.04 & 50.5 $\pm$ 1.33 & 99.8 $\pm$ 0.03  &  50.1 $\pm$ 1.34 & 99.9 $\pm$ 0.02  \\
                                &  NR-CP (w/o $\Delta$) &  {6.4 $\pm$ 0.28} & \  {89.9 $\pm$ 0.54} & {4.0 $\pm$ 0.11} & \  {89.9 $\pm$ 0.55}  &  \  {2.0 $\pm$ 0.06} & \  {89.9 $\pm$ 0.56}  \\ 
    CIFAR-100        &  ACNL &  100.0 $\pm$ 0.00 & 100.0 $\pm$ 0.00 & 100.0 $\pm$ 0.00 & 100.0 $\pm$ 0.00   &  100.0 $\pm$ 0.00 & 100.0 $\pm$ 0.00  \\
(100 classes)                              &  CRCP &  25.7 $\pm$ 3.71 & 98.7 $\pm$ 0.39 & 8.5 $\pm$ 0.41 & 98.3 $\pm$ 0.16  &  11.1 $\pm$ 3.46 & 98.7 $\pm$ 0.42  \\
                              &  NACP &  \textbf{9.0 $\pm$ 0.46} & \textbf{93.0 $\pm$ 0.49} & \textbf{4.8 $\pm$ 0.13} & \textbf{93.0 $\pm$ 0.48}  &  \textbf{2.5 $\pm$ 0.09} & \textbf{93.0 $\pm$ 0.52}  \\
                            \hline
                              &  CP (oracle) &  14.9 $\pm$ 0.60 & 90.0 $\pm$ 0.61 & 6.9 $\pm$ 0.19 & 90.0 $\pm$ 0.62  &  3.8 $\pm$ 0.13 & 90.02 $\pm$ 0.58  \\
                              &  Noisy-CP &  99.7 $\pm$ 3.67 & 99.7 $\pm$ 0.08 & 101.4 $\pm$ 3.58 & 99.5 $\pm$ 0.09  &  98.3 $\pm$ 3.80 & 99.8 $\pm$ 0.05  \\
                                &  NR-CP (w/o $\Delta$) &  \  {14.0 $\pm$ 0.91} & \  {89.4 $\pm$ 0.81} & {6.7 $\pm$ 0.27} & \  {89.3 $\pm$ 0.80}  &  \  {3.5 $\pm$ 0.24} & \  {89.3 $\pm$ 0.80}  \\ 
                 TinyImagenet        &  ACNL & 200.0 $\pm$ 0.00 & 100.0 $\pm$ 0.00 & 200.0 $\pm$ 0.00 & 100.0 $\pm$ 0.00  &  200.0 $\pm$ 0.00 & 100.0 $\pm$ 0.00  \\
(200 classes)                              &  CRCP &  200.0 $\pm$ 0.00 & 100.0 $\pm$ 0.00 & 200.0 $\pm$ 0.00 & 100.0 $\pm$ 0.00 &  200.0 $\pm$ 0.00 & 100.0 $\pm$ 0.00  \\
                              &  NACP &  \textbf{22.6 $\pm$ 1.87} & \textbf{93.7 $\pm$ 0.71} & \textbf{9.0 $\pm$ 0.50} & \textbf{93.6 $\pm$ 0.70} &  \textbf{7.0 $\pm$ 0.87} & \textbf{93.6 $\pm$ 0.72}  \\
                            \hline
                              &  CP (oracle) &  16.6 $\pm$ 0.33 & 90.0 $\pm$ 0.26 & 6.3 $\pm$ 0.06 & 90.0 $\pm$ 0.27  &  3.6 $\pm$ 0.07 & 90.0 $\pm$ 0.28  \\
                              &  Noisy-CP &  502.6 $\pm$ 8.56 & 99.9 $\pm$ 0.01 & 501.6 $\pm$ 8.51 & 99.9 $\pm$ 0.01  &  501.3 $\pm$ 10.2 & 100.0 $\pm$ 0.01  \\
                                                  &  NR-CP (w/o $\Delta$) &  {16.7 $\pm$ 0.51} & \  {90.0 $\pm$ 0.34} & {6.3 $\pm$ 0.10} & \  {90.0 $\pm$ 0.36}  &  {3.6 $\pm$ 0.14} & \  {90.0 $\pm$ 0.38}  \\
                  ImageNet        &  ACNL &  1000.0 $\pm$ 0.00 & 100.0 $\pm$ 0.00 & 1000.0 $\pm$ 0.00 & 100.0 $\pm$ 0.00  &  1000.0 $\pm$ 0.00 & 100.0 $\pm$ 0.00  \\
(1000 classes)                              &  CRCP &  1000.0 $\pm$ 0.00 & 100.0 $\pm$ 0.00 & 1000.0 $\pm$ 0.00 & 100.0 $\pm$ 0.00  &  1000.0 $\pm$ 0.00 & 100.0 $\pm$ 0.00  \\
                              &  NACP &  \textbf{20.9 $\pm$ 0.72} & \textbf{91.9 $\pm$ 0.32} & \textbf{7.1 $\pm$ 0.13} & \textbf{91.9 $\pm$ 0.34}  &  \textbf{4.8 $\pm$ 0.23} & \textbf{91.9 $\pm$ 0.36}  \\
           \hline
\end{tabular}
    }
\label{many_classes_datasets}
\end{table*}

\textbf{Evaluation Measures}.  We evaluated each CP method   based on the average size of the prediction sets (where a small value means high efficiency) and the fraction of test samples for which the prediction sets contained the ground-truth labels. The two evaluation metrics are formally defined as:
$$ \textrm{size} = \frac{1}{n} \sum_i | C(x_i) |,  \hspace{0.3cm}
 \textrm{coverage} = \frac{1}{n} \sum_i
{\textbf{1}}(y_i \in C(x_i))$$
such that  $n$ is the size of the test set.  We report the mean and standard deviation over 1000 random splits.

 \textbf{Datasets.}
We show results on four standard scenery image datasets {CIFAR-10},
 {CIFAR-100}
 \citep{krizhevsky2009learning}, {Tiny-ImageNet}, and ImageNet \citep{deng2009imagenet}.

\textbf{Implementation details.} Each task was trained by fine-tuning on a pre-trained ResNet-18 \citep{he2016deep} network. The models were taken from the PyTorch site\footnote{
\url{https://pytorch.org/vision/stable/models.html}}. We selected this network architecture because of its widespread use in classification problems.
The last fully connected layer output size was modified to fit the corresponding number of classes for each dataset. 
For the standard dataset evaluated in Table \ref{many_classes_datasets} we used publicly available checkpoints. For each dataset, we combined the calibration and test sets and then constructed 1000 different splits where 50\% was used for the calibration phase and 50\% was used for testing. In all our experiments we used $\delta=0.001$. In other words, the computed
coverage guarantee is applied to the sampled noisy calibration set with probability 0.999,

\begin{table}
    \caption{Finite sample correction terms $\Delta$ of NACP, ACNL \citep{sesia2023adaptive} and CRCP \citep{Clarkson2024},  for several datasets and two noise levels, $n$ is the size of the calibration set.}
    \centering
     \scalebox{.90}{
    \begin{tabular}{lrr|cc|cc|cc}
      Dataset &  $n $&\#classes &  \multicolumn{2}{c}{NACP  }  & \multicolumn{2}{c}{ACNL}  &
        \multicolumn{2}{c}{CRCP} \\
                 & & & $\epsilon=0.1$ & $ \epsilon=0.2$ &
            $\epsilon=0.1$ &
            $\epsilon=0.2$ &
            $\epsilon=0.1$ &
            $\epsilon=0.2$ \\
            \hline
            CIFAR-10 & 5000 &10& 0.035 & 0.043 & 0.031 & 0.059 & \textbf{0.016} & \textbf{0.036}\\
            CIFAR-100 & 10000 &100& \textbf{0.025} & \textbf{0.030} & 0.077 & 0.163 & 0.039 & 0.088 \\
            TinyImagenet & 5000 & 200 & \textbf{0.035} & \textbf{0.043} & 0.175 & 0.382 & 0.078 & 0.176 \\
            ImageNet & 25000 &1000  & \textbf{0.016} & \textbf{0.019} & 0.194 & 0.466 & 0.079 & 0.177\\ \hline
    \end{tabular}
           }
    \label{tab:deltas_only_cv}
\end{table}

 % \textbf{CP results on datasets with a large number of classes.}
  \textbf{Conformal prediction results.}
 Table \ref{many_classes_datasets} reports the noisy label calibration results across 3 different conformal prediction scores, HPS, APS, and RAPS for four standard publicly available datasets,  CIFAR-10, CIFAR-100, Tiny-ImageNet, and ImageNet. In all cases, we used $1\!-\!\alpha=0.9$ and a noise level of $\epsilon=0.2$. The results indicate that in the case of a calibration set with noisy labels, the Noisy-CP threshold became larger to facilitate the uncertainty induced by the noisy labels. This yielded larger prediction sets and the coverage was higher than the target coverage which was set to $90\%$. We can see that  NACP outperformed the ACNL, and CRCP methods for all datasets except for CIFAR-10 with fewer classes. Following Theorem \ref{theorem3-2}, we expect the gain in performance when using NACP versus Noisy-CP to increase with the number of classes, indeed validated empirically in Table \ref{many_classes_datasets}.  Here for CIFAR-100, Tiny-ImageNet, and ImageNet the ACNL  and CRCP  methods failed due to the large number of classes and the relatively small number of samples per class. A direct comparison of the finite sample correction terms $\Delta$ obtained by NACP, ACNL and CRCP is shown in Table  \ref{tab:deltas_only_cv}. Note that if $1-\alpha+\Delta >1$, the prediction set includes all the classes and thus it becomes useless. We can see in Table \ref{tab:deltas_only_cv} that this is the case for ACNL and CRCP in datasets with a large number of classes when the noise level is sufficiently high.

\textbf{Correction term analysis.} Following the theoretical and empirical results, the effectiveness of our method and baselines can be fully explained by the correction terms $\Delta$ each method guarantees as practitioners require coverage guarantee and therefore will use $1-\alpha+\Delta$.  Note that, as explained in Section 4,
our finite sample coverage guarantee is different from the one provided by the baseline method. Figure \ref{fig:corr_terms_analysis} presents the correction term as a function of calibration set size and the number of classes. 
Note that the NACP curve remains exactly the same across the 3 plots. 
Our main  contribution is grounded in the fact that NACP is not dependent on the number of classes $k$, clearly shown in plots as the number of classes grows.

\begin{comment}
\textbf{ImageNet with different calibration set size.}
In the following experiment, we test the performance of various conformal prediction methods under noisy labels as a function of the calibration set size on the ImageNet dataset. Figure \ref{fig:size_sweep} shows the mean size and coverage as a function of the calibration set size. In addition, the correction term $\Delta$ is depicted for ImageNet for each calibration set size. Results show that even with as little as 2500 images that correspond to 2.5 images per class the calibration results are almost on par with the oracle calibration given clean labels.

\begin{figure}[h]
    \centering
    \includegraphics[width=0.32\linewidth]{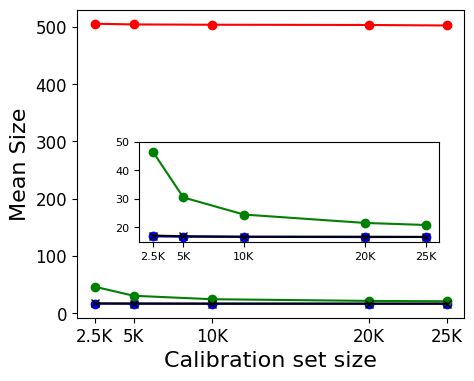}
    \includegraphics[width=0.32\linewidth]{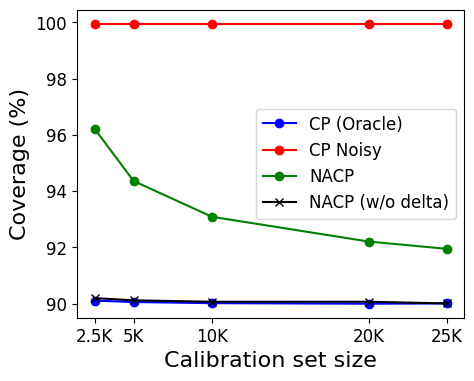} 
    \includegraphics[width=0.32\linewidth]{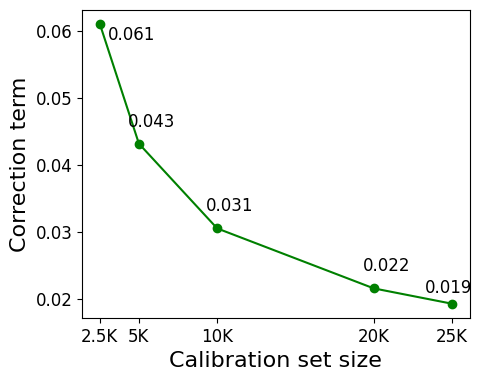} \\
    \hspace{1cm} (a) \hspace{4cm} (b) \hspace{4cm} (c)
    \caption{Noisy labels conformal prediction on ImageNet with different calibration set sizes. (a) Mean size (b) Coverage (\%), and (c) Correction terms $\Delta$ as a function of calibration set size.}
    \label{fig:size_sweep}
\end{figure}
\end{comment}

\begin{figure}
  \begin{center}
    \includegraphics[width=0.3\textwidth]{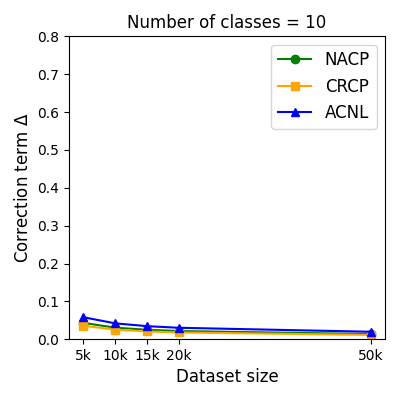}
    \includegraphics[width=0.3\textwidth]{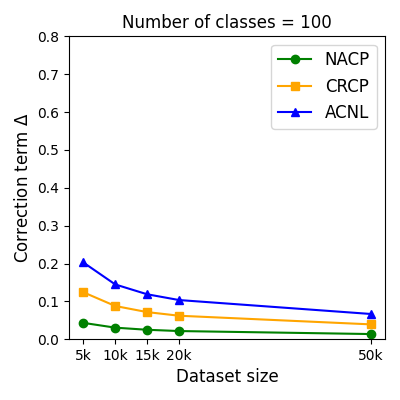}
    \includegraphics[width=0.3\textwidth]{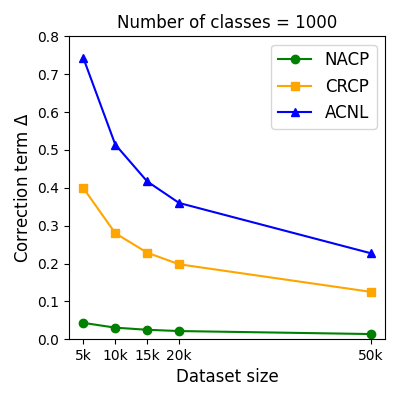}
  \end{center}
  \caption{Correction terms $\Delta$ of NACP, ACNL and CRCP as a function of the calibration set size $n$ given $\epsilon=0.2$.
  We show results for 3 numbers of classes, 10, 100 and 1000.}
  \label{fig:corr_terms_analysis}
\end{figure}

\textbf{Different network architectures.}
Conformal prediction in general and our method NACP specifically has is agnostic to the underlying model architecture.
We next experimentally verify it with ImageNet across different model architectures. Table \ref{many_archs} presents the results of applying conformal prediction on noisy labels using  ResNet18, ResNet50, DenseNet121, ViT-B16 (Vision transformer).
We can see that the NACP yields significantly improved results
regardless of the network architecture while the coverage guarantee provided by ACNL and CPRC is useless.
We can see that even without adding any correction term, we get the required coverage guarantee. This indicates that all the current correction terms are very conservative.    
\begin{table}[h]
\centering
\caption{ CP calibration results on ImageNet and various model architectures for $1\!-\!\alpha$ = 0.9
and $\epsilon=0.2$.  We report the mean and the std over 1000 different splits. Bold for best result with theoretical guarantees.}
\label{raps2appn}
   \scalebox{.6}{
\begin{tabular}{ll|rr|rr|rr|rr}
 & & \multicolumn{2}{c|}{ResNet-18} & \multicolumn{2}{c|}{ResNet-50}  & \multicolumn{2}{c|}{DenseNet121}  & \multicolumn{2}{c}{ViT-B16} \\
\hline
Dataset      &  CP Method & size $\downarrow$ &  coverage(\%)   &  size $\downarrow$ &  coverage(\%) & size $\downarrow$ &  coverage(\%)    & size $\downarrow$ &  coverage(\%)     \\ \hline

APS    &  CP (oracle) &  16.6 $\pm$ 0.33 & 90.0 $\pm$ 0.26  &  13.9 $\pm$ 0.34 & 90.0 $\pm$ 0.28   &   12.0 $\pm$ 0.28 & 90.0 $\pm$ 0.27  & 10.7 $\pm$ 0.38 & 90.0 $\pm$ 0.25   \\
            &  Noisy-CP &  502.6 $\pm$ 8.56 & 99.9 $\pm$ 0.01  &  505.5 $\pm$ 8.11 & 99.9 $\pm$ 0.01   &   502.8 $\pm$ 8.46 & 99.9 $\pm$ 0.01  & 506.8 $\pm$ 8.14 & 99.8 $\pm$ 0.02   \\
            &  NR-CP (w/o $\Delta$)     &  \  {16.7 $\pm$ 0.51} & \  {90.0 $\pm$ 0.34}  &  13.9 $\pm$ 0.47 & 90.0 $\pm$ 0.37   &   12.0 $\pm$ 0.38 & 90.0 $\pm$ 0.34  & 10.7 $\pm$ 0.55 & 90.0 $\pm$ 0.35   \\ 
            & ACNL & 1000.0 $ \pm 0.00$  & 100.0  $\pm$ 0.00   & 1000.0 $ \pm 0.00$  & 100.0  $\pm$ 0.00  & 1000.0 $ \pm 0.00$  & 100.0  $\pm$ 0.00  & 1000.0 $ \pm 0.00$  & 100.0  $\pm$ 0.00   \\
            & CRCP & 1000.0 $ \pm 0.00$  & 100.0  $\pm$ 0.00  & 1000.0 $ \pm 0.00$  & 100.0  $\pm$ 0.00  & 1000.0 $ \pm 0.00$  & 100.0  $\pm$ 0.00  & 1000.0 $ \pm 0.00$  & 100.0  $\pm$ 0.00                   \\
            &  NACP   &  \textbf{20.9 $\pm$ 0.72} & \textbf{91.9 $\pm$ 0.32}  &  \textbf{17.4 $\pm$ 0.62} & \textbf{91.9 $\pm$ 0.36}   &   \textbf{15.1 $\pm$ 0.55} & \textbf{91.9 $\pm$ 0.34}  & \textbf{15.5 $\pm$ 0.81} & \textbf{91.9 $\pm$ 0.31} \\
            \hline               
RAPS   &  CP (oracle)  &  6.3 $\pm$ 0.06 & 90.0 $\pm$ 0.27  &  4.5 $\pm$ 0.05 & 89.9 $\pm$ 0.29   &   4.7 $\pm$ 0.06 & 90.0 $\pm$ 0.26  &  2.6 $\pm$ 0.04 & 90.0 $\pm$ 0.25   \\
            &  Noisy-CP  &  501.6 $\pm$ 8.51 & 99.9 $\pm$ 0.01  &  501.1 $\pm$ 8.85 & 99.9 $\pm$ 0.01   &   501.9 $\pm$ 8.80 & 99.9 $\pm$ 0.01  & 505.8 $\pm$ 7.90 & 99.9 $\pm$ 0.01   \\
            &  NR-CP (w/o $\Delta$)    &  \  {6.3 $\pm$ 0.10} & \  {90.0 $\pm$ 0.36}  &  4.5 $\pm$ 0.06 & 90.0 $\pm$ 0.35  & 4.7 $\pm$ 0.08 & 90.0 $\pm$ 0.36  &   2.6 $\pm$ 0.05 & 90.0 $\pm$ 0.36    \\
            & ACNL & 1000.0 $ \pm 0.00$  & 100.0  $\pm$ 0.00  & 1000.0 $ \pm 0.00$  & 100.0  $\pm$ 0.00  & 1000.0 $ \pm 0.00$  & 100.0  $\pm$ 0.00  & 1000.0 $ \pm 0.00$  & 100.0  $\pm$ 0.00            \\
            & CRCP & 1000.0 $ \pm 0.00$  & 100.0  $\pm$ 0.00  & 1000.0 $ \pm 0.00$  & 100.0  $\pm$ 0.00  & 1000.0 $ \pm 0.00$  & 100.0  $\pm$ 0.00  & 1000.0 $ \pm 0.00$  & 100.0  $\pm$ 0.00                   \\
            &  NACP    &  \textbf{7.1 $\pm$ 0.13} & \textbf{91.9 $\pm$ 0.34}  &  \textbf{5.0 $\pm$ 0.08} & \textbf{91.9 $\pm$ 0.34}   &   \textbf{5.3 $\pm$ 0.10} & \textbf{92.0 $\pm$ 0.35}  &  \textbf{2.9 $\pm$ 0.07} & \textbf{92.0 $\pm$ 0.30}    \\
            \hline                
HPS         &  CP (oracle) &  3.6 $\pm$ 0.07 & 90.0 $\pm$ 0.28  &  2.0 $\pm$ 0.03 & 90.0 $\pm$ 0.28   &   2.4 $\pm$ 0.03 & 90.0 $\pm$ 0.25  &   1.5 $\pm$ 0.02 & 90.0 $\pm$ 0.26   \\
            &  Noisy-CP  &  501.3 $\pm$ 10.2 & 100.0 $\pm$ 0.01   &  502.4 $\pm$ 9.50 & 99.9 $\pm$ 0.01   &   502.3 $\pm$ 10.3 & 99.9 $\pm$ 0.20  &  504.3 $\pm$ 8.19 & 99.9 $\pm$ 0.01   \\
             &  NR-CP (w/o $\Delta$)    &  \  {3.6 $\pm$ 0.14} & \  {90.0 $\pm$ 0.38} & 2.1 $\pm$ 0.06 & 90.0 $\pm$ 0.38  &  2.4 $\pm$ 0.07 & 90.0 $\pm$ 0.34   &   1.5 $\pm$ 0.03 & 90.0 $\pm$ 0.35   \\
            & ACNL & 1000.0 $ \pm 0.00$  & 100.0  $\pm$ 0.00  & 1000.0 $ \pm 0.00$  & 100.0  $\pm$ 0.00  & 1000.0 $ \pm 0.00$  & 100.0  $\pm$ 0.00  & 1000.0 $ \pm 0.00$  & 100.0  $\pm$ 0.00                   \\
            & CRCP & 1000.0 $ \pm 0.00$  & 100.0  $\pm$ 0.00  & 1000.0 $ \pm 0.00$  & 100.0  $\pm$ 0.00  & 1000.0 $ \pm 0.00$  & 100.0  $\pm$ 0.00  & 1000.0 $ \pm 0.00$  & 100.0  $\pm$ 0.00                   \\
            &  NACP   &  \textbf{4.8 $\pm$ 0.23} & \textbf{91.9 $\pm$ 0.36}  &  \textbf{2.6 $\pm$ 0.10} & \textbf{91.9 $\pm$ 0.37}      & \textbf{3.1 $\pm$ 0.12} & \textbf{91.9 $\pm$ 0.34} & \textbf{1.7 $\pm$ 0.04} & \textbf{91.9 $\pm$ 0.33}  \\
           \hline
\end{tabular}
    }
\label{many_archs}
\end{table}

\section{Conclusions}

We presented a procedure that applies the Conformal Prediction algorithm on a calibration set with noisy labels. We first presented our method in the simpler case of a uniform noise model and then extended it to a general noise matrix. We showed that if the noise level is given,  we can find the noise-free calibration threshold without access to clean data by using the noisy-label data.  We showed that the finite sample coverage guarantee obtained by current methods is not effective in the case of classification tasks with a large number of classes. We proposed a different notion of coverage guarantee and
derived a suitable finite sample coverage guarantee that remains effective in tasks with a large number of classes.  We showed, however, that even without adding the finite sample term, noise-robust methods obtained the required coverage. This indicates that the current coverage guarantee methods are very conservative and there is room for future research to improve it and to find much tighter bounds.  In this study, we focused on noise models that assume the noisy label and the input image are independent, given the true label. In a more general noise model, the label corruption process also depends on the input features.
\bibliography{refs}

\begin{thebibliography}{24}
\providecommand{\natexlab}[1]{#1}
\providecommand{\url}[1]{\texttt{#1}}
\expandafter\ifx\csname urlstyle\endcsname\relax
  \providecommand{\doi}[1]{doi: #1}\else
  \providecommand{\doi}{doi: \begingroup \urlstyle{rm}\Url}\fi

\bibitem[Angelopoulos et~al.(2021)Angelopoulos, Bates, Malik, and Jordan]{angelopoulos2020uncertainty}
Anastasios~N. Angelopoulos, Stephen Bates, Jitendra Malik, and Michael~I Jordan.
\newblock Uncertainty sets for image classifiers using conformal prediction.
\newblock \emph{International Conference on Learning Representations (ICLR)}, 2021.

\bibitem[Angelopoulos et~al.(2023)Angelopoulos, Bates, et~al.]{angelopoulos2023conformal}
Anastasios~N Angelopoulos, Stephen Bates, et~al.
\newblock Conformal prediction: A gentle introduction.
\newblock \emph{Foundations and Trends in Machine Learning}, 16\penalty0 (4):\penalty0 494--591, 2023.

\bibitem[Clarkson et~al.(2024)Clarkson, Xu, i~Cucuringu, and Reinert]{Clarkson2024}
Jase Clarkson, Wenkai Xu, Mihai i~Cucuringu, and Gesine Reinert.
\newblock Split conformal prediction under data contamination.
\newblock In \emph{Proceedings of the Symposium on Conformal and Probabilistic Prediction with Applications}, 2024.

\bibitem[Deng et~al.(2009)Deng, Dong, Socher, Li, Li, and Fei-Fei]{deng2009imagenet}
Jia Deng, Wei Dong, Richard Socher, Li-Jia Li, Kai Li, and Li~Fei-Fei.
\newblock Imagenet: A large-scale hierarchical image database.
\newblock In \emph{Proc. of the IEEE Conference on Computer Vision and Pattern Recognition (CVPR)}, 2009.

\bibitem[Einbinder et~al.(2022)Einbinder, Bates, Angelopoulos, Gendler, and Romano]{einbinder2022conformal}
Bat-Sheva Einbinder, Stephen Bates, Anastasios~N Angelopoulos, Asaf Gendler, and Yaniv Romano.
\newblock Conformal prediction is robust to label noise.
\newblock \emph{arXiv preprint arXiv:2209.14295}, 2022.

\bibitem[Ghazi et~al.(2021)Ghazi, Golowich, Kumar, Manurangsi, and Zhang]{ghazi2021}
Badih Ghazi, Noah Golowich, Ravi Kumar, Pasin Manurangsi, and Chiyuan Zhang.
\newblock Deep learning with label differential privacy.
\newblock In \emph{Advances in Neural Information Processing Systems (NeurIPs)}, 2021.

\bibitem[Guo et~al.(2017)Guo, Pleiss, Sun, and Weinberger]{Guo2017}
Chuan Guo, Geoff Pleiss, Yu~Sun, and Kilian~Q Weinberger.
\newblock On calibration of modern neural networks.
\newblock In \emph{International Conference on Machine Learning (ICML)}, 2017.

\bibitem[He et~al.(2016)He, Zhang, Ren, and Sun]{he2016deep}
Kaiming He, Xiangyu Zhang, Shaoqing Ren, and Jian Sun.
\newblock Deep residual learning for image recognition.
\newblock In \emph{Proc. of the IEEE Conference on Computer Vision and Pattern Recognition (CVPR)}, 2016.

\bibitem[Krizhevsky et~al.(2009)Krizhevsky, Hinton, et~al.]{krizhevsky2009learning}
Alex Krizhevsky, Geoffrey Hinton, et~al.
\newblock Learning multiple layers of features from tiny images.
\newblock 2009.

\bibitem[Li et~al.(2021)Li, Liu, Han, Niu, and Sugiyama]{li2021provably}
Xuefeng Li, Tongliang Liu, Bo~Han, Gang Niu, and Masashi Sugiyama.
\newblock Provably end-to-end label-noise learning without anchor points.
\newblock In \emph{International Conference on Machine Learning (ICML)}, 2021.

\bibitem[Lin et~al.(2023)Lin, Pi, Zhang, Xia, Gao, Zhou, Liu, and Han]{Lin2023Holistic}
Yong Lin, Renjie Pi, Weizhong Zhang, Xiaobo Xia, Jiahui Gao, Xiao Zhou, Tongliang Liu, and Bo~Han.
\newblock A holistic view of label noise transition matrix in deep learning and beyond.
\newblock In \emph{International Conference on Learning Representations (ICLR)}, 2023.

\bibitem[Lu et~al.(2022{\natexlab{a}})Lu, Angelopoulos, and Pomerantz]{lu2022improving}
Charles Lu, Anastasios~N Angelopoulos, and Stuart Pomerantz.
\newblock Improving trustworthiness of {AI} disease severity rating in medical imaging with ordinal conformal prediction sets.
\newblock In \emph{International Conference on Medical Image Computing and Computer-Assisted Intervention (MICCAI)}, 2022{\natexlab{a}}.

\bibitem[Lu et~al.(2022{\natexlab{b}})Lu, Lemay, Chang, H{\"o}bel, and Kalpathy-Cramer]{lu2022fair}
Charles Lu, Andr{\'e}anne Lemay, Ken Chang, Katharina H{\"o}bel, and Jayashree Kalpathy-Cramer.
\newblock Fair conformal predictors for applications in medical imaging.
\newblock In \emph{Proceedings of the AAAI Conference on Artificial Intelligence}, 2022{\natexlab{b}}.

\bibitem[Massart(1990)]{massart1990tight}
Pascal Massart.
\newblock The tight constant in the {D}voretzky-{K}iefer-{W}olfowitz inequality.
\newblock \emph{The Annals of Probability}, pages 1269--1283, 1990.

\bibitem[Olsson et~al.(2022)Olsson, Kartasalo, Mulliqi, et~al.]{olsson2022estimating}
Henrik Olsson, Kimmo Kartasalo, Nita Mulliqi, et~al.
\newblock Estimating diagnostic uncertainty in artificial intelligence assisted pathology using conformal prediction.
\newblock \emph{Nature Communications}, 13\penalty0 (1):\penalty0 7761, 2022.

\bibitem[Penso and Goldberger(2024)]{Penso_2024}
Coby Penso and Jacob Goldberger.
\newblock A conformal prediction score that is robust to label noise.
\newblock In \emph{MICCAI Int. Workshop on Machine Learning in Medical Imaging (MLMI)}, 2024.

\bibitem[Penso et~al.(2024)Penso, Frenkel, and Goldberger]{Penso2024TMI}
Coby Penso, Lior Frenkel, and Jacob Goldberger.
\newblock Confidence calibration of a medical imaging classification system that is robust to label noise.
\newblock \emph{IEEE Transactions on Medical Imaging}, 43\penalty0 (6):\penalty0 2050--2060, 2024.

\bibitem[Penso et~al.(2025)Penso, Mahpud, Goldberger, and Sheffet]{Penso2025}
Coby Penso, Bar Mahpud, Jacob Goldberger, and Or~Sheffet.
\newblock Privacy-preserving conformal prediction under local differential privacy.
\newblock In \emph{Proceedings of the Symposium on Conformal and Probabilistic Prediction with Applications}, 2025.

\bibitem[Romano et~al.(2020)Romano, Sesia, and Candes]{romano2020classification}
Yaniv Romano, Matteo Sesia, and Emmanuel Candes.
\newblock Classification with valid and adaptive coverage.
\newblock \emph{Advances in Neural Information Processing Systems}, 2020.

\bibitem[Sesia et~al.(2024)Sesia, Wang, and Tong]{sesia2023adaptive}
Matteo Sesia, YX~Rachel Wang, and Xin Tong.
\newblock Adaptive conformal classification with noisy labels.
\newblock \emph{Journal of the Royal Statistical Society Series B: Statistical Methodology}, 2024.

\bibitem[Song et~al.(2022)Song, Kim, Park, Shin, and Lee]{9729424}
Hwanjun Song, Minseok Kim, Dongmin Park, Yooju Shin, and Jae-Gil Lee.
\newblock Learning from noisy labels with deep neural networks: A survey.
\newblock \emph{IEEE Transactions on Neural Networks and Learning Systems}, pages 1--19, 2022.

\bibitem[Vovk et~al.(2005)Vovk, Gammerman, and Shafer]{vovk2005conformal}
Vladimir Vovk, Alexander Gammerman, and Glenn Shafer.
\newblock \emph{Algorithmic learning in a random world}, volume~29.
\newblock Springer, 2005.

\bibitem[Xue et~al.(2022)Xue, Yu, Chen, Dou, and Heng]{Xue2022}
Cheng Xue, Lequan Yu, Pengfei Chen, Qi~Dou, and Pheng-Ann Heng.
\newblock Robust medical image classification from noisy labeled data with global and local representation guided co-training.
\newblock \emph{IEEE Transactions on Medical Imaging}, 41\penalty0 (6):\penalty0 1371--1382, 2022.

\bibitem[Zhang et~al.(2021)Zhang, Niu, and Sugiyama]{zhang2021learning}
Yivan Zhang, Gang Niu, and Masashi Sugiyama.
\newblock Learning noise transition matrix from only noisy labels via total variation regularization.
\newblock In \emph{International Conference on Machine Learning (ICML)}, 2021.

\end{thebibliography}

% \newpage
% \appendix
% \section{ }

 %\input{appendix}

\end{document}